\newcommand*{\circled}[1]{\lower.7ex\hbox{\tikz\draw (0pt, 0pt)%
    circle (.5em) node {\makebox[1em][c]{\small #1}};}}
\theoremstyle{definition}
\newtheorem{definition}{Definition}
\newtheorem{theorem}{Theorem}
\newtheorem{lemma}{Lemma}
\newcolumntype{I}{!{\vrule width 1.2pt}}
\newlength\savedwidth
\newlength\savewidth
\newcommand\shline{\noalign{\global\savewidth\arrayrulewidth
                           \global\arrayrulewidth 1.2pt}%
                  \hline
                  \noalign{\global\arrayrulewidth\savewidth}}
\begin{document}
\begin{CJK*}{UTF8}{gbsn}

\title{Fast Exact NPN Classification with Influence-aided Canonical Form
\thanks{\IEEEauthorrefmark{1}These authors contributed equally to this work; $^\text{\Letter}$Corresponding authors.}
}

\author{\IEEEauthorblockN{
Yonghe~Zhang$^{1,}$\IEEEauthorrefmark{1},
Liwei~Ni$^{2,4,}$\IEEEauthorrefmark{1},
Jiaxi~Zhang$^{3,\text{\Letter}}$,
Guojie~Luo$^{3}$,
Huawei~Li$^{4,5}$
and Shenggen~Zheng$^{2,\text{\Letter}}$,
}
\IEEEauthorblockA{
$^1$Shenzhen University, Shenzhen, China}
\IEEEauthorblockA{
$^2$Peng Cheng Laboratory, Shenzhen, China}
\IEEEauthorblockA{
$^3$School of Computer Science, Peking University, Beijing, China}
\IEEEauthorblockA{
$^4$University of Chinese Academy of Sciences, Beijing, China}
\IEEEauthorblockA{
$^5$Institute of Computing Technology, Chinese Academy of Sciences, Beijing, China}
\IEEEauthorblockA{
Emails: zhangjiaxi@pku.edu.cn, 
zhengshg@pcl.ac.cn
}}

\maketitle


\begin{abstract}

NPN classification has many applications in the synthesis and verification of digital circuits.
The canonical-form-based method is the most common approach, designing a canonical form as representative for the NPN equivalence class first and then computing the transformation function according to the canonical form.
Most works use variable symmetries and several signatures, mainly based on the cofactor, to simplify the canonical form construction and computation.
This paper describes a novel canonical form and its computation algorithm by introducing Boolean influence to NPN classification,  which is a basic concept in analysis of Boolean functions. 
We show that influence is input-negation-independent, input-permutation-dependent, and has other structural information than previous signatures for NPN classification. 
Therefore, it is a significant ingredient in speeding up NPN classification.
Experimental results prove that influence plays an important role in reducing the transformation enumeration in computing the canonical form.
Compared with the state-of-the-art algorithm implemented in \textit{ABC}, our influence-aided canonical form for exact NPN classification gains up to 5.5x speedup.

\end{abstract}

\begin{IEEEkeywords}
NPN Classification, Canonical From, Influence
\end{IEEEkeywords}

\section{Introduction}
Boolean function \textbf{NPN}~(Negation-Permutation-Negation) classification groups functions into NPN equivalent classes.
Each function in the same equivalent class can be obtained from the other by three transformations: \textbf{N}egating the inputs, \textbf{P}ermuting the inputs, or \textbf{N}egating the output.
NPN classification is one of the critical steps in many applications, such as logic synthesis~\cite{yang2012lazy,haaswijk2017classifying}, technology mapping~\cite{cong2001boolean}, and verification~\cite{mohnke2001application}.
The speed of classification will directly affect the efficiency of these tools because of the exploding search space of the Boolean function.

Due to its great importance, NPN classification has been well studied in the past decades.
The canonical-form-based approach is one of the most commonly used methods for exact NPN classification.
First, it designs a complete and unique representation for a Boolean function called canonical form.
This canonical form is taken as the representative, and 
two functions fall into the same NPN equivalent class when their canonical forms are the same.
Then a computation algorithm decides the output polarity, the phase assignment, and the input order of each variable for a function to get the minimum canonical form.

Truth table is a basic canonical form, and the exhaustive transformation enumeration is the most primitive computation method.
However, $2^{n+1}n!$ transformations ($2^n$ for input negation, $n!$ for input permutation, $2$ for output negation) should be enumerated for an $n$ input function, which is impractical when $n$ is large.
Many optimization methods have been proposed to improve the NPN classification efficiency, including constructing new canonical forms~\cite{hinsberger1998boolean,debnath2004efficient,abdollahi2005new,agosta2009transform} and designing better computation methods~\cite{chai2006building,kravets2000generalized,huang2013fast,abdollahi2008symmetry,petkovska2016fast,zhou2019fast,zhou2020fast}.
Pruning transformation enumeration is the key to reducing computation time in the canonical-form-based approach.
Various works exploited several signatures~\cite{hinsberger1998boolean, chai2006building, zhou2019fast, zhou2020fast} and variable symmetry~\cite{chai2006building,kravets2000generalized,huang2013fast,abdollahi2008symmetry,petkovska2016fast,zhou2019fast,zhou2020fast} to prune the enumeration space.
Signatures are compact representations that characterize a Boolean function's properties. 
However, these signatures also need to be computed, introducing additional runtime.
A co-design of canonical form and computation algorithm~\cite{zhou2020fast} is required to achieve better speedup.

In Boolean functions analysis, the concept of \textit{influence} is defined to measure the probability that one input variable affects the value of the function~\cite{kahn1989influence,o2014analysis}. 
The value of influence of a variable $x_i$ can be computed easily from  Boolean difference \cite{akers1959theory}. 
This concept has been applied to hardware security~\cite{waksman2013fanci,zhang2013veritrust} since inputs with high influence are of more interest in critical applications.
Existing work has proved that Boolean influence can be regarded as a type of signature for Boolean matching~\cite{zhang2023rethinking}.
But this work does not provide an exact matching solution for the functions in the same NPN classes, posing challenges for its practical application.
This paper explores Boolean influence's great potential in canonical-form-based exact NPN classification.

The main contributions are summarized as the following: 
\begin{itemize}
\item
We redefine the concept of Boolean influence and highlight that computing the redefined influence using commonly utilized cofactor signatures is straightforward.
Consequently, incorporating influence into canonical form-based method requires only a modest amount of additional computation.
\item 
We analyze some properties of influence in NPN classification scenarios, including that it is phase-independent and permutation-dependent. 
Such properties can help reduce the enumeration times of permutation in canonical form computation.
\item We devise two influence-aided canonical forms based on the existing canonical-form-based NPN classification method.
One directly introduces Boolean influence into the canonical form, while the other replaces some signatures of high computation complexity with influence.
\item The experiments demonstrate that both proposed two influence-aided canonical forms in this paper effectively reduce the number of final transformation enumerations and achieve significant speedup compared to the state-of-the-art method.
\end{itemize}


\section{Background}\label{sec:background}
In this section, we first introduce several commonly used signatures and techniques in NPN classification. 
Subsequently, we illustrate a state-of-the-art canonical form-based classification method. 

\subsection{Frequently Used Signatures}
\label{sec:signature}
We first list some basic notations before introducing the definitions of signatures.
An $n$-input and single-output Boolean function, $f(X):B^{n}\to B$, $B$=$\{0,1\}$, where $X$=$(x_{1},x_{2},...,x_{n})$,$x_{i}\in B$,$1\leq i \leq n$ is a binary vector with size $n$, representing the input variables. 
$X$ can be considered as a binary number $(x_{n}x_{n-1}...x_{1})_{2}$, whose value is $m$, denoted as $X_{(m)}$. 
Besides, we denote $X^i$ as negating the $i$-th variable in $X$.
The truth table $T(f)=(f(X_{2^n-1}),...,f(X_{(1)}),f(X_{(0)}))$ is a bit vector with size $2^n$.

The \emph{satisfy count} of $f$ is the number of all minterms for which $f$ evaluates to 1, denoted as $|f|$, where $|f|=\sum_{X\in B^n} f(X)$.
A \emph{cofactor} of $f$ with respect to a literal $x_i$($\overline{x_i}$) is the function derived by setting $x_i$($\overline{x_i}$) to 1 and denoted by $f_{x_i}$($f_{\overline{x_i}}$).
The 0th-order cofactor signature is the satisfy count, and the 1st-order cofactor signature is the satisfy counts of the cofactors with respect to each variable, denoted as $S_{cof}(f)=\{|f|,|f_{x_1}|,...,|f_{x_n}|\}$.
Moreover, higher-order cofactor signature is the satisfy counts of the cofactors with respect to multiple variables~\cite{abdollahi2008symmetry}. 

Boolean functions can be represented as the disjunction of minterms. 
In this representation, \emph{row sums}~(RS) is a sorted vector composed of the number of positive variables in each minterm.
A single value can be computed by adding the square of each value in row sums, called \emph{sum of squared row sums}~(SSRS).
Another efficient value \emph{sum of exponential row sums}~(SERS) can be obtained from RS~\cite{agosta2009transform,zhou2020fast}.
The SERS of a Boolean function is the 0th-order \emph{shifted-cofactor} signature, denoted as $S^0_{scc}(f)=\|f\|$;
the 1st-order shifted-cofactor signatures are SERS of the cofactors with respect to each variable, denoted as $S^1_{scc}(f)=\{\|f_{x_1}\|,...,\|f_{x_n}\|\}$.

\subsection{Variable Symmetries}
\label{sec:symmetry}

Two variables $x_i$ and $x_j$ are \textit{symmetric} in $f$ if the function value does not change when $x_i$ and $x_j$ are swapped.
A \textit{symmetric group} is composed of symmetric variables.
The higher-order symmetry is defined between two or more symmetry groups if they can be swapped without changing the function value.
In some cases, one variable in a symmetry group is not symmetric with each variable in another, but these two symmetry groups are higher-order symmetric.

Symmetry variables and higher-order symmetry groups can be permuted and negated like a single variable during the canonical form computation, thus significantly reducing the number of transformation enumerations.
Exploiting variable symmetries and higher-order symmetries are adopted in many NPN classification methods~\cite{abdollahi2005new,abdollahi2008symmetry, petkovska2016fast, huang2013fast, zhou2019fast, zhou2020fast}. 

\subsection{Canonical-Form-based Method}
\label{sec:canonicalformexample}

The key of the canonical-form-based method lies in determining the output polarity, phase assignment, and input order of a Boolean function to get the canonical form of its equivalent class.
The output polarity is associated with output negation, while the input phase assignment corresponds to input negation. 
Furthermore, the input order corresponds to the permutation of inputs.

The transformation space is vast~($2^{n+1}n!$), and signatures can help effectively prune the space.
If a signature fulfills the following requirements, it can help to determine partially the three types of transformations.
The following three requirements can be met: 
1)\emph{Phase requirement}.
The signature value of a single variable is determined solely by its own phase assignment and remains independent of the phase assignment of other variables.
2)\emph{Permutation requirement}.
Applying a permutation to the variables leads to the same permutation of the signature values for each variable.
3)\emph{Polarity requirement}. 
The signature value of the function is primarily determined by the polarity of the output and remains independent of the permutation and phase assignment of input variables.

Cofactor signatures fulfill the three requirements.
They can be used to determine the output polarity and partially decide the input order and the phase of input variables.
This means that cofactor signatures can reduce part of negation and permutation enumeration.
In addition, the computational cost of the cofactor signatures is low.
Therefore, many works used cofactor signatures to build their NPN classification methods~\cite{chai2006building,abdollahi2008symmetry,huang2013fast,zhou2019fast}.
Since the order of the variables does not affect the number of positive variables in the minterm, RS and SSRS are permutation-independent, which means these two values do not fulfill the polarity requirement and permutation requirement.
Therefore, these two signatures cannot distinguish polarity and permutation.
Some works used these two signatures to determine phase assignments of input variables~\cite{chai2006building,zhou2019fast}.

Pruning permutation is more critical because permutation enumeration space~($n!$) is more significant than phase assignment enumeration~($2^n$) when $n$ is larger than 4.
Shifted-cofactor signatures fulfill the permutation requirement and have proved more effective than cofactor in distinguishing the permutation~\cite{agosta2009transform}.
Thus, the integration of cofactor and shifted cofactor in the canonical form~\cite{zhou2020fast} outperforms the utilization of cofactor alone with SSRS in terms of efficacy~\cite{zhou2019fast}.
Furthermore, the state-of-the-art work~\cite{zhou2020fast} introduces cost signature $C_{p}$ to assess the overhead of permutations under different phase assignments, aiming to select more promising phase assignments for permutation enumeration and ultimately achieve better speedup.

\begin{figure}[tbp]
\setlength{\belowcaptionskip}{0pt}
\centering
\begin{subfigure}[b]{0.24\textwidth}
\centering
\includegraphics[scale=0.45]{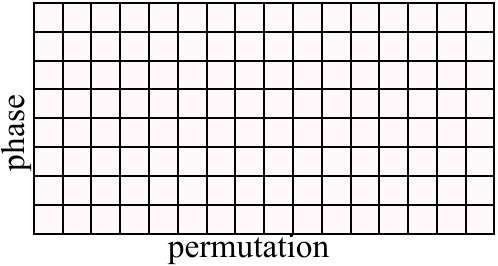}
\vspace{-0.3em}
\caption{\footnotesize{Original enumeration space.}}
\label{fig:original}
\end{subfigure}
\begin{subfigure}[b]{0.24\textwidth}
\centering
\includegraphics[scale=0.45]{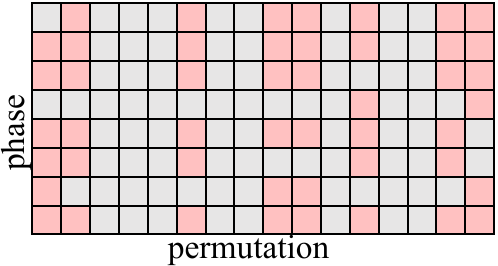}
\vspace{-0.3em}
\caption{\footnotesize{After Variable grouping.}}
\label{fig:prune2}
\end{subfigure}
\begin{subfigure}[b]{0.24\textwidth}
\centering
\includegraphics[scale=0.45]{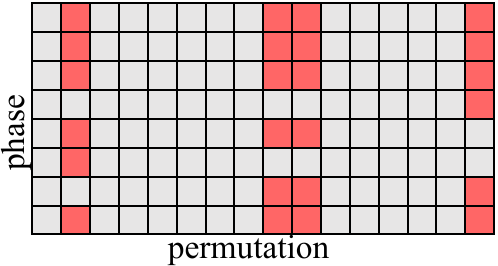}
\vspace{-0.3em}
\caption{\footnotesize{After symmetry detection.}}
\label{fig:prune3}
\end{subfigure}
\begin{subfigure}[b]{0.24\textwidth}
\centering
\includegraphics[scale=0.45]{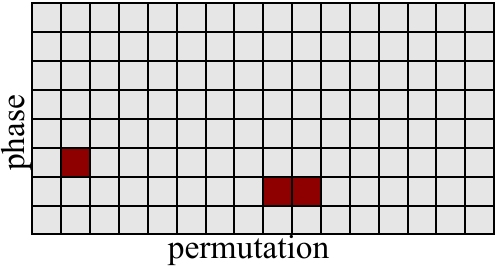}
\vspace{-0.3em}
\caption{\footnotesize{After cost estimation.}}
\label{fig:prune4}
\end{subfigure}
\vspace{-0.5em}
\caption{Schematic of the remaining enumerations at different pruning stages, grey points are pruned enumerations.
(a) All phase assignments and permutations are candidates to obtain the canonical form at the beginning.
(b) After variable grouping, candidate enumerations are separated into different groups.  
(c) Variable symmetry detection further prunes the candidate enumerations.
(d) After cost estimation, only three transformations need to be enumerated to finish the final exhaustive enumeration.}
\label{fig:pruningdemo}
\end{figure}

Figure~\ref{fig:pruningdemo} shows the schematic of the remaining enumeration space at different pruning stages of the cost-aware hybrid signature canonical form proposed in previous work~\cite{zhou2020fast}.
First, they decide the output polarity and group variables by cofactor signatures.
Certain enumerations of phase assignments can be pruned through this step.
Subsequently, symmetry detection within each group will be conducted to eliminate certain phase assignments and permutations.
Next, for the variables that have yet to determine their phase assignments and permutations, the cost signature is utilized to identify phase assignments with lower permutation costs. 
Then, the shifted-cofactor signature is employed to further prune permutations.
After all pruning strategies, an exhaustive enumeration of residual transformations is used to achieve exact NPN classification.
However, the effectiveness of pruning permutations using the shifted-cofactor signature remains limited. 
This paper discovers a more effective signature to prune permutations, Boolean influence.



\section{The Power of Influence}\label{sec:influence}

In this section,  we provide an in-depth examination of Boolean influence in the context of NPN classification, including its redefined definition, facile computation, and some properties. For primitive definitions and proofs of difference and influence,  one may refer to \cite{o2014analysis,akers1959theory}. 

\subsection{Redefined Influence and Facile Computation}

In canonical form-based NPN classification methods, some signatures can be used to prune the search space of transformation enumerations.
But the computation of these signatures cannot be too complicated.
Otherwise, the additional computational overhead will diminish the benefit.
To better explain the computation of Boolean influence, we redefine it using difference~\cite{akers1959theory}.


\begin{definition} 
\label{def: differenc}
(\emph{difference})   \cite{akers1959theory}.
Given a Boolean function $f(X)$, the Boolean difference with respect to a basic variable $x_i$ is defined as follows:
$$\frac{\delta f}{\delta x_i}=f(x_1,\cdots,x_i,\cdots,x_n)\oplus  f(x_1,\cdots,\overline{x_i},\cdots,x_n).$$
\end{definition}

\begin{definition}
\label{def:inf}
(\emph{redefined influence}).
The \textit{influence} of the variable $x_{i}$ on $f$ is defined as follows:  
$$inf_i(f)=\sum\limits_{X \in B^n} \frac{\delta f}{\delta x_i}(X).$$
\end{definition}



We drop a global coefficient of $1/2^n$ in the above definition compared to the original definition in~\cite{o2014analysis}. 
The influence of a variable $x_i$ is the number of functions that flipping the variable $x_i$ flips the value of the function.  
In some sense, one can see the influence of a variable is a measure of the importance of the variable in all input variables.  
From the above definition, influence can be computed for each input variable.   
It has the potential to be a signature of a Boolean function to distinguish variables for NPN classification.

The 1st-order cofactor is easy to compute, and many canonical-form-based methods have adopted it.
The characteristics of cofactor and influence seem to be quite different from each other.  
However, they are relevant to each other in some respect. 
We will show some detailed relationships between these concepts as followings. 
We can get the following Lemma from previous definitions  \cite{akers1959theory}.

\begin{lemma}
\label{lm:difference}
(\emph{difference vs. cofactor}).
Difference can be computed by cofactor as follows: 
$$\frac{\delta f}{\delta x_i}(X)= f_{x_{i}}(X) \oplus f_{\overline{x_{i}}}(X).$$
\end{lemma}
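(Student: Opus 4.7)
The plan is to unfold both sides of the claimed identity using the definitions already given and close the gap with a short two-case split on the value of the $i$-th coordinate of $X$. The lemma is essentially a notational reconciliation: the difference XORs $f$ evaluated at $x_i$ with $f$ evaluated at $\overline{x_i}$, while the cofactor pair XORs $f$ with the $i$-th coordinate hard-set to $1$ against $f$ with it hard-set to $0$. Since $\oplus$ is commutative, the two formulations should match pointwise regardless of whether $x_i = 0$ or $x_i = 1$.

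First I would expand the right-hand side using the Section~\ref{sec:signature} convention that $f_{x_i}$ is the function obtained from $f$ by forcing the $i$-th input to $1$. Evaluated at $X$, this gives $f_{x_i}(X) = f(x_1,\ldots,x_{i-1},1,x_{i+1},\ldots,x_n)$ independently of the actual bit carried by $X$ in position $i$, and symmetrically $f_{\overline{x_i}}(X) = f(x_1,\ldots,x_{i-1},0,x_{i+1},\ldots,x_n)$. Thus the right-hand side is an XOR of $f$ at the two Boolean points that agree with $X$ outside coordinate $i$ and disagree there.

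Next I would unfold the left-hand side via Definition~\ref{def: differenc} and split on $x_i$. If $x_i = 1$ then $\overline{x_i} = 0$, and the two operands on the left appear in the same order as those on the right; if $x_i = 0$ then $\overline{x_i} = 1$, and they appear in swapped order, so commutativity of $\oplus$ identifies them with the right-hand side. Either way the identity holds at every $X \in B^n$, which is the claim.

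I do not anticipate a real obstacle here; the only point worth stating carefully is the convention that $f_{x_i}(X)$ ignores the $i$-th coordinate of its argument, since Section~\ref{sec:signature} introduces the cofactor as a function without pinning down its evaluation on a full $n$-bit point. Once that convention is made explicit, the proof is pure bookkeeping and uses no structural property of $f$ beyond its being a Boolean function.
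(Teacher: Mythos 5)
Your proof is correct. The paper itself gives no proof of Lemma~\ref{lm:difference} --- it simply asserts that the identity follows from the previous definitions and cites Akers --- so your write-up supplies an argument the paper leaves implicit. The unfolding of both sides plus the two-case split on $x_i$ (using commutativity of $\oplus$ when $x_i=0$) is exactly the right bookkeeping, and you are right to flag the one genuine subtlety: the convention that $f_{x_i}(X)$ and $f_{\overline{x_i}}(X)$ ignore the $i$-th coordinate of $X$, which Section~\ref{sec:signature} never states explicitly but which the lemma (and the later proofs of phase-independence) tacitly assume. Nothing is missing.
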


From Definition~\ref{def:inf} and Lemma~\ref{lm:difference}, we can easily derive a facile computation method for the redefined influence.

\begin{lemma}
\label{lm:infcompute}
(\emph{influence computation}). The influence can be obtained from the cofactor signature as follows:
$$inf_{i}(f)=\sum\limits_{X \in B^n} \frac{\delta f}{\delta x_i}(X)=\sum\limits_{X \in B^n} f_{x_i}(X)\oplus f_{\overline {x_i}}(X).$$
\end{lemma}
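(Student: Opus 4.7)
The plan is very short because the lemma is essentially a two-step chain that assembles Definition~\ref{def:inf} with Lemma~\ref{lm:difference}. First I would read off the leftmost equality $inf_i(f) = \sum_{X \in B^n} \tfrac{\delta f}{\delta x_i}(X)$ directly from Definition~\ref{def:inf}; nothing needs to be shown here, as it is the definition of the redefined influence.

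For the second equality I would invoke Lemma~\ref{lm:difference} pointwise. For every $X \in B^n$, that lemma gives $\tfrac{\delta f}{\delta x_i}(X) = f_{x_i}(X) \oplus f_{\overline{x_i}}(X)$, so substituting this identity term-by-term into the sum yields
\[
\sum_{X \in B^n} \frac{\delta f}{\delta x_i}(X) \;=\; \sum_{X \in B^n} \bigl(f_{x_i}(X) \oplus f_{\overline{x_i}}(X)\bigr),
\]
which is exactly the claimed right-hand side. Since the substitution happens inside a finite ordinary integer sum, no interchange-of-operations subtlety arises: $\oplus$ produces a $\{0,1\}$ value at each $X$, and those values are then summed as integers.

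There is essentially no obstacle in this proof; it is a direct chaining of the preceding definition and lemma. The only thing worth flagging explicitly is that we are treating the XOR result $f_{x_i}(X) \oplus f_{\overline{x_i}}(X)$ as an integer in $\{0,1\}$ so that the outer summation $\sum_{X}$ makes sense as an integer-valued signature, which is consistent with how $|f|$ and the cofactor signatures were defined in Section~\ref{sec:signature}. Given the triviality, I would keep the proof to a single short displayed line of substitution rather than elaborating further.
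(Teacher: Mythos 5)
Your proposal is correct and matches the paper's own justification, which simply derives the lemma by combining Definition~\ref{def:inf} with the pointwise identity of Lemma~\ref{lm:difference}. The only addition you make --- noting that the XOR value is treated as an integer in $\{0,1\}$ inside the finite sum --- is a harmless clarification consistent with how the cofactor signatures are defined.
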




Lemma~\ref{lm:infcompute} demonstrates that the influence can be intuitively computed using cofactors.
This also implies that if we define a canonical form using influence, the additional computation overhead is low.
The above two lemmas also show that influence further explores cofactors.
It explores the difference between cofactor signature pairs and has extra structural information. 
An $n$-input Boolean function can be seen as an $n$-dimensional hypercube.  
Cofactor focuses on the statistical characteristics of a face $x_i$ of the hypercube, while influence cares about the difference between two opposite faces $x_i$ and $\overline {x_i}$ of the hypercube.  
These two features are complementary in some sense.

\subsection{Understanding Boolean Influence in NPN Classification}
\label{sec:infproperty}

To prune transformation enumerations by incorporating influence into the canonical form, it is necessary to first understand the relationship between influence signatures and transformations.
It can be seen from the following theorem that the influence signature changes only with permutation~(P) transformation, and any input phase assignment and polarity assignment will not change the influence.

\begin{theorem}
(\emph{phase-independent}).
Variable negation does not change Boolean influence.
\end{theorem}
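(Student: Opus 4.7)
The plan is to reduce the statement to a single-variable negation and then verify invariance of each individual $inf_i$ under that operation; since every input phase assignment is a composition of single-variable negations, iterating yields the full theorem. Let $g$ denote the function obtained from $f$ by negating a single input $x_j$, so that $g(X)=f(X^{j})$ in the paper's notation. By Lemma~\ref{lm:infcompute} applied to $g$, and since $f_{x_i}(X)\oplus f_{\overline{x_i}}(X)=f(X)\oplus f(X^{i})$,
$$inf_i(g)=\sum_{X\in B^n} g(X)\oplus g(X^{i})=\sum_{X\in B^n} f(X^{j})\oplus f((X^{i})^{j}).$$

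Next I would split on whether $i=j$ or $i\neq j$. If $i\neq j$, the bit-flip operations on distinct coordinates commute, so $(X^{i})^{j}=(X^{j})^{i}$; the substitution $Y=X^{j}$ is a bijection of $B^n$ onto itself (bit-flipping is an involution), and the sum transforms into $\sum_{Y\in B^n} f(Y)\oplus f(Y^{i})=inf_i(f)$. If $i=j$, then $(X^{i})^{j}=X$, and by the commutativity of XOR the sum collapses directly to $\sum_{X\in B^n} f(X^{i})\oplus f(X)=inf_i(f)$. In both cases $inf_i(g)=inf_i(f)$, which is exactly what we need.

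To finish, I would note that an arbitrary phase assignment corresponds to negating some subset $S\subseteq\{1,\dots,n\}$ of input variables, which is the composition of $|S|$ single-variable negations; applying the one-step invariance $|S|$ times shows that $inf_i$ is preserved for every $i$ under every phase assignment.

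The main obstacle is bookkeeping rather than anything conceptual: one must be careful that $Y=X^{j}$ really is a bijection on $B^n$ (it is, because bit-flip is self-inverse) and that the case $i=j$ is handled separately so the commutation argument is not mis-applied. No deeper combinatorial or structural input is needed beyond Lemma~\ref{lm:infcompute}, the involutivity of bit-flipping, and the commutativity of XOR.
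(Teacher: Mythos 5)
Your proof is correct, and it uses the same core identity as the paper (rewriting $inf_i$ as $\sum_X f(X)\oplus f(X^i)$ via the difference/cofactor lemma), but it is strictly more complete than the paper's own argument. The paper's proof only checks the diagonal case: it negates input $i$ and verifies that $inf_i$ is preserved, via $g_{x_i}=f_{\overline{x_i}}$ and $g_{\overline{x_i}}=f_{x_i}$. The theorem as stated, however, requires that negating $x_j$ leaves $inf_i$ unchanged for \emph{every} $i$, including $i\neq j$ --- and that is the case the paper silently skips. Your case split supplies exactly the missing piece: for $i\neq j$ you use the commutation $(X^i)^j=(X^j)^i$ together with the fact that $X\mapsto X^j$ is an involution, hence a bijection of $B^n$, to reindex the sum --- the same change-of-variables device the paper itself deploys in its permutation-dependence proof. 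Your final step, composing single-variable negations to cover arbitrary phase assignments, is also worth making explicit (the paper treats one negation at a time and leaves the composition implicit). In short: same method, but your version actually proves the full statement, whereas the paper's proof as written establishes only the special case where the negated variable and the variable whose influence is measured coincide.
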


\begin{proof}
Supposing $g$ is derived from $f$ through the phase assignment of any input $i$, $g(X)=f(X^i)$.
According to definition~\ref{def: differenc} and definition~\ref{def:inf}, we have  $\frac{\delta f}{\delta x_i}(X)  =   f_{x_i}(X)  \oplus f_{\overline {x_i}}(X)  =g_{\overline {x_i}}(X)  \oplus g_{x_i}(X)   =   \frac{\delta g}{\delta x_i}(X) $.
Thus, $inf_{i}(g)={\Sigma}_{X \in B^n} \frac{\delta g}{\delta x_i}(X) = {\Sigma}_{X \in B^n}\frac{\delta f}{\delta x_i}(X)  = inf_{i}(f)$.
\end{proof}

\begin{theorem}
(\emph{polarity-independent}).
Output negation does not change Boolean influence.
\end{theorem}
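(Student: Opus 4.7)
The plan is to mirror the structure of the preceding phase-independence proof, swapping input negation for output negation. Let $g$ be the function obtained from $f$ by output negation, i.e.\ $g(X) = \overline{f(X)} = 1 \oplus f(X)$ for every $X \in B^n$. The goal is to show that the per-variable Boolean difference is unchanged, from which equality of the influences follows immediately upon summation.

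First I would rewrite the cofactors of $g$ in terms of those of $f$. Since cofactoring commutes with complementation, $g_{x_i}(X) = \overline{f_{x_i}(X)} = 1 \oplus f_{x_i}(X)$ and similarly $g_{\overline{x_i}}(X) = 1 \oplus f_{\overline{x_i}}(X)$. Plugging these into Lemma~\ref{lm:difference} gives
$$\frac{\delta g}{\delta x_i}(X) = g_{x_i}(X) \oplus g_{\overline{x_i}}(X) = \bigl(1 \oplus f_{x_i}(X)\bigr) \oplus \bigl(1 \oplus f_{\overline{x_i}}(X)\bigr),$$
and the two constant $1$'s cancel under XOR, leaving $f_{x_i}(X) \oplus f_{\overline{x_i}}(X) = \frac{\delta f}{\delta x_i}(X)$. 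Summing over $X \in B^n$ via Definition~\ref{def:inf} then yields $inf_i(g) = inf_i(f)$.

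There is essentially no hard step here; the whole argument is a one-line XOR cancellation. The only thing worth being careful about is invoking the right definitions in the right order (Definition~\ref{def: differenc}, then Lemma~\ref{lm:difference}, then Definition~\ref{def:inf}) so that the proof parallels the preceding theorem and the reader can see that phase-independence and polarity-independence follow from the same self-cancelling structure of the Boolean difference under complementation.
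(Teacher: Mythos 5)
Your proof is correct and matches the paper's own argument essentially line for line: both express the cofactors of $g$ as complements of those of $f$ and use the self-cancelling identity $\overline{a}\oplus\overline{b}=a\oplus b$ (your explicit $1\oplus{}$ cancellation) to conclude the Boolean difference, and hence the influence, is unchanged. No gaps.
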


\begin{proof}
Supposing $g$ is derived from $f$ through the polarity assignment, $g(X)$=$\overline{f(X)}$.
According to definition~\ref{def: differenc}  and definition~\ref{def:inf}, we have $\frac{\delta g}{\delta x_i}(X)$=$g_{x_i}(X) \oplus g_{\overline x_i}(X)$=$\overline{f_{x_i}(X)} \oplus \overline{ f_{\overline {x_i}}(X) }$ = $f_{x_i}(X) \oplus f_{\overline {x_i}}(X)$=$\frac{\delta f}{\delta x_i}(X) $.
Thus, $inf_{i}(g)$ =$inf_{i}(f)$.
\end{proof}

\begin{theorem}
(\emph{permutation-dependent}).
The influence of variable changes synchronously  with the permutation transformation.
\end{theorem}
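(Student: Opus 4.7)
The plan is to follow the same template used for the phase-independent and polarity-independent theorems, with a permutation taking the place of a negation. First I would fix an arbitrary permutation $\pi$ on $\{1,\ldots,n\}$ and let $g$ be obtained from $f$ by permuting inputs according to $\pi$, i.e., $g(x_1,\ldots,x_n)=f(x_{\pi(1)},\ldots,x_{\pi(n)})$. For a point $X=(x_1,\ldots,x_n)\in B^n$, let $\pi(X)$ denote the permuted vector $(x_{\pi(1)},\ldots,x_{\pi(n)})$; crucially $\pi$ acts as a bijection on $B^n$.

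Next I would relate the cofactors of $g$ to those of $f$. Setting $x_i=1$ in $g$ corresponds, after the coordinate renaming, to setting $x_{\pi(i)}=1$ in $f$, so $g_{x_i}(X)=f_{x_{\pi(i)}}(\pi(X))$ and analogously $g_{\overline{x_i}}(X)=f_{\overline{x_{\pi(i)}}}(\pi(X))$. Invoking Lemma~\ref{lm:difference} and XOR-ing the two expressions then gives
\[
\frac{\delta g}{\delta x_i}(X)=f_{x_{\pi(i)}}(\pi(X))\oplus f_{\overline{x_{\pi(i)}}}(\pi(X))=\frac{\delta f}{\delta x_{\pi(i)}}(\pi(X)).
\]

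Finally I would sum over $X\in B^n$ and apply the change of variables $Y=\pi(X)$, which is legitimate because $\pi$ is a bijection of $B^n$. This yields
\[
inf_i(g)=\sum_{X\in B^n}\frac{\delta g}{\delta x_i}(X)=\sum_{Y\in B^n}\frac{\delta f}{\delta x_{\pi(i)}}(Y)=inf_{\pi(i)}(f),
\]
which is exactly the statement that the vector of influences of $g$ is the vector of influences of $f$ permuted by $\pi$, i.e., the influence of each variable moves synchronously with its input position.

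The main obstacle I anticipate is purely bookkeeping: being careful that $\pi$ plays two distinct roles simultaneously — relabelling the variable index $i\mapsto\pi(i)$ and permuting the bit-vector $X\mapsto\pi(X)$ — so that the change-of-variables step is unambiguous. Once that is nailed down, the argument is a mechanical parallel to the preceding two proofs, with the bijectivity of $\pi$ on $B^n$ doing the real work in the final summation.
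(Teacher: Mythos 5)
Your proposal is correct and follows essentially the same route as the paper: express the difference of $g$ at a variable in terms of the difference of $f$ at the permuted variable, then reindex the sum over $B^n$ using the bijectivity of $\pi$. The only cosmetic differences are that you route the difference through the cofactor identity of Lemma~\ref{lm:difference} where the paper writes it directly as $g(X)\oplus g(X^k)$, and your indexing convention tracks $\pi$ in the opposite direction ($inf_i(g)=inf_{\pi(i)}(f)$ versus the paper's $inf_{\pi(i)}(g)=inf_i(f)$), which is the bookkeeping subtlety you already flag and does not affect the conclusion.
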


\begin{proof}
Assuming that $g(X) = f(\pi(X))$, where $\pi$ represents a permutation transformation and $k = \pi(i)$,  we observe the following:
 $g(X)=f(\pi(X))$
and 
    $g(X^k)=f(\pi(X^k)) =f(\pi(X)^i)$.

According to definition~\ref{def: differenc} and definition~\ref{def:inf}, we have
\begin{equation}
\nonumber
    \begin{aligned}
    inf_{k}(g) &= \sum\limits_{X \in B^n} \frac{\delta g}{\delta x_k}(X)=\sum\limits_{X \in B^n} g(X) \oplus g(X^k)\\
    &= \sum_{X \in B^n} f(\pi(X)) \oplus f(\pi(X)^i).
    \end{aligned}
\end{equation}
An essential observation is that the set $\{X|X \in B^n\} = \{\pi(X)|X \in B^n\}$, which leads us to:
\begin{equation}
\nonumber
    \begin{aligned}
    inf_{k}(g) &= \sum_{X \in B^n} f(\pi(X)) \oplus f(\pi(X)^i)\\
    &= \sum_{X \in B^n} f(X) \oplus f(X^i)=inf_{i}(f) .
    \end{aligned}
\end{equation}
\end{proof}
From the above three theorems, it can be found that influence signature can determine the permutation, which is very helpful for pruning permutation enumeration.

\subsection{An Example of Influence}
\label{sec:infexample}

In order to reduce permutations, one common way used in previous work~\cite{zhou2019fast, zhou2020fast} is to group variables according to signatures. 
Variable grouping can separate the permutation space into isolated ones, and permutations only appear in the same group.
For a 6-bit Boolean function:
\begin{align}
\nonumber
    f=&\overline{x_3}\overline{x_5}x_6+\overline{x_3}x_4x_6+x_3\overline{x_5}\overline{x_6}+x_3x_4\overline{x_6}+\\
    &x_2x_3\overline{x_4}x_5+x_1\overline{x_3}\overline{x_4}x_5\\
    =&(\mathrm{5DAE51AE5DA251A2})_{16}
\end{align}
The cofactor signatures, shifted-cofactor signatures, and influence signatures of each variable are listed in Table~\ref{tab:signatureexample}.
And Figure~\ref{fig:5DAE51AE5DA251A2} also shows the logic graph of $f$ with the support set of inputs in And-Inverter Graph format.
If the signatures~(combination) of two variables are the same, then this signature~(combination) can not distinguish these two variables.
These two variables belong to the same group using such signature~(combination).
Table~\ref{tab:groupexample} shows the variable grouping results using the different signature combinations.
For this Boolean function, the combination of cofactor and influence signature splits two more groups than the combination of cofactor and shifted-cofactor signature.
This shows that the influence signature prunes more permutation enumeration than the shifted-cofactor signature.
This also shows that the influence signature has great potential for permutation pruning in the canonical form-based NPN classification method.

\begin{figure}[htbp]
    \centering
    \small
    \includegraphics[scale=0.25]{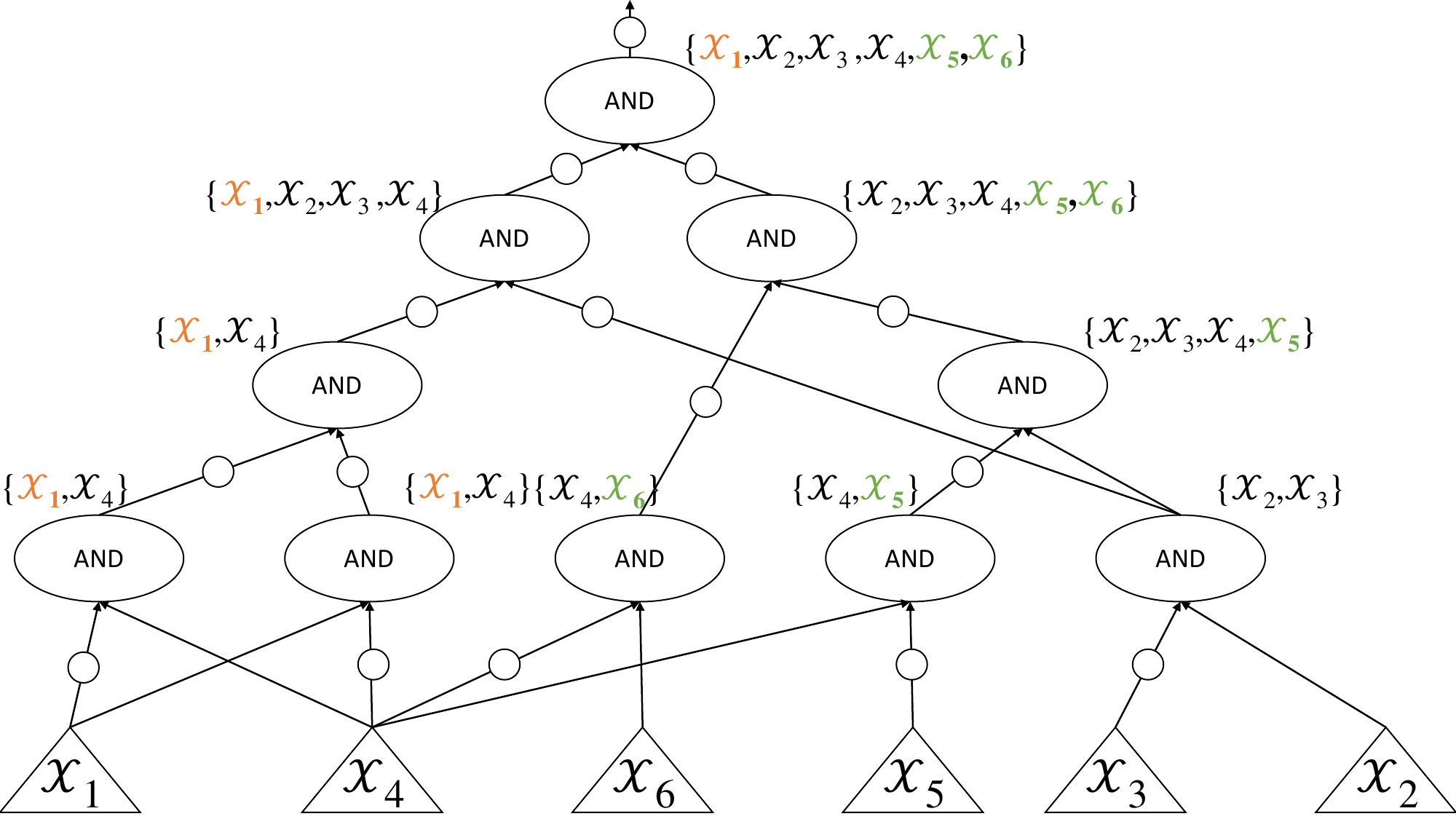}
    \caption{AND-Inverter graph for $f$=$(\mathrm{5DAE51AE5DA251A2})_{16}$}
    \label{fig:5DAE51AE5DA251A2}
\end{figure}

\vspace{5pt}
\begin{table}[htbp]
\caption{Signatures results.}
\label{tab:signatureexample}
    \centering
    \begin{tabular}{c|c|c|c|c|c|c}
        \hline
        \textbf{Signatures}       & $x_1$ & $x_2$ & $x_3$ & $x_4$ & $x_5$ & $x_6$ \\
        \hline
        cofactor       & 16    & 16    & 16    & 16    & 18    & 18  \\
        shifted-cofactor       & 81    & 135   & 81    & 135   & 159   & 159 \\
        influence       & 24    & 8     & 28    & 4     & 4     & 4  \\
        \hline
    \end{tabular}
\end{table}

\vspace{5pt}
\begin{table}[htbp]
\caption{Variable grouping results.}
\label{tab:groupexample}
    \centering
    \begin{tabular}{c|c}
        \hline
        \textbf{Signatures~(Combination)}       &  \textbf{Variable Groups} \\
        \hline
        cofactor              & $(x_1, x_2, x_3, x_4)$, $(x_5,x_6)$  \\
        cofactor+shifted-cofactor          & $(x_1, x_3)$, $(x_2, x_4)$, $(x_5, x_6)$ \\
        cofactor+influence         & $(x_1)$, $(x_2)$, $(x_3)$, $(x_4)$, $(x_5, x_6)$ \\
        All three signatures      & $(x_1)$, $(x_2)$, $(x_3)$, $(x_4)$, $(x_5, x_6)$ \\
        \hline
    \end{tabular}
\end{table}

\section{Hybrid Influence-aided Canonical Form}
\label{sec:canonical}

In the previous section, we pointed out that influence signature is permutation-dependent and fulfills the permutation requirement, which means influence has excellent potential for pruning permutation enumerations.
In this section, we introduce the hybrid influence-aided canonical form and its computation algorithm.

\subsection{Canonical Form Definition}
\label{sec:influencecanonical}

\begin{definition}
(\emph{hybrid influence-aided signature vector}). 
The \emph{hybrid influence-aided signature vector} $S_{hi}(f)$ of a Boolean function $f$ is denoted as a vector composed of cofactor signatures $S_{cof}(f)$, influence signatures $S_{inf}(f)$, the permutation cost signature $C_{p}(f)$, the 0th-order and 1st-order shifted-cofactor signatures  $S^0_{scc}(f)$ and $S^1_{scc}(f)$, followed by the truth table $T(f)$, that is,
\begin{equation}
\nonumber
\begin{aligned}
    S_{hi}(f) =&S_{cof}(f)  S_{inf}(f) C_{p}(f)  S^0_{scc}(f) S^1_{scc}(f) T(f) \\
     =&\Big( \lvert f \rvert, |f_{x_{1}}|, |f_{x_{2}}|,..., |f_{x_{n}}|, \\  
      &inf_{x_1}(f),inf_{x_2}(f),...,inf_{x_n}(f),  \\
      &C_{p}(f), \|f\|, \|f_{x_{1}}\|, \|f_{x_{2}}\|,...,   \|f_{x_{n}}\|, \\
      &f(X_{(2^n-1)}), ..., f(X_{(1)}),  f(X_{(0)} ) \Big),
\end{aligned}
\end{equation}
where the hybrid signatures are concatenated as a larger vector.

\end{definition}

\begin{theorem}
\label{tm:hybridinfvec}
(\emph{The uniqueness of $S_{hi}(f)$}).
For two Boolean functions $f$ and $g$: $f \neq g$ if and only if $S_{hi}(f) \neq S_{hi}(g)$.
\end{theorem}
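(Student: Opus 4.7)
The plan is to exploit the fact that the truth table $T(f)$ appears as a component of $S_{hi}(f)$, so the claimed equivalence reduces almost immediately to the well-known fact that a Boolean function is uniquely determined by its truth table. I would phrase the argument as two implications, since the statement is an ``if and only if.''

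First I would handle the easy direction ($f = g \Rightarrow S_{hi}(f) = S_{hi}(g)$). Every component of $S_{hi}$---the satisfy count, the first-order cofactor signatures, the influence signatures, the cost signature $C_p$, the shifted-cofactor signatures of orders $0$ and $1$, and the truth table---is a deterministic function of $f$. Hence if $f$ and $g$ denote the same Boolean function, all of these components coincide entrywise, so $S_{hi}(f) = S_{hi}(g)$. Taking the contrapositive gives $S_{hi}(f) \neq S_{hi}(g) \Rightarrow f \neq g$.

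For the other direction ($f \neq g \Rightarrow S_{hi}(f) \neq S_{hi}(g)$), I would observe that the truth table $T(f) = (f(X_{(2^n-1)}),\ldots,f(X_{(1)}),f(X_{(0)}))$ is itself a canonical representation of $f$: two Boolean functions agree iff their truth tables agree. Therefore $f \neq g$ forces $T(f) \neq T(g)$, and since $T(\cdot)$ is a suffix of the concatenated vector $S_{hi}(\cdot)$, the two vectors $S_{hi}(f)$ and $S_{hi}(g)$ must differ in at least one coordinate.

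There is no real obstacle here; the only subtlety worth flagging in the write-up is that the inclusion of the truth table makes all the other components (cofactor, influence, cost, shifted-cofactor signatures) redundant for the purpose of the uniqueness claim itself---their role is not uniqueness but rather to accelerate the comparison/pruning during canonical-form computation. I would therefore keep the proof to a few lines, remarking that the redundancy is intentional so that the auxiliary signatures can be compared lexicographically before the truth table is ever examined.
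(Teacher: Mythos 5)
Your proposal is correct and follows essentially the same route as the paper: one direction uses the fact that the truth table $T(f)$ is a component of $S_{hi}(f)$ and uniquely determines $f$, and the other uses the fact that every component is determined by $f$. Your handling of the converse is actually cleaner than the paper's, which does a case analysis over which component differs (with a separate contradiction argument for $S_{inf}$ and an appeal to prior works for the rest), whereas you collapse all cases into the single observation that each signature is a deterministic function of $f$.
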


\begin{proof}
(1)Sufficiency:
If $f \neq g$, then $T(f) \neq T(g)$.
Thus, $S_{hi}(f) \neq S_{hi}(g)$.

(2)Necessity:
If $S_{hi}(f) \neq S_{hi}(g)$, then there is at least one part $s \in \{S_{cof}, S_{inf}, C_{p}, S_{ scc}, T\}$ s.t. $s(f) \neq s(g)$.
If $s$ = $S_{inf}$, according to the definition of $S_{inf}$, there is at least one variable $x_i$ s.t. $inf_{i}(f) \neq inf_{i}(g)$.
However, if $f$=$g$, then $inf_{x_i}(f) = inf_{x_i}(g)$.
By contradiction, it is evident that $f \neq g$.
If $s \neq S_{inf}$, then $s$ must belongs to $\{ S_{cof}, C_{p}, S_{scc}, T \}$.
According to the existing works~\cite{agosta2009transform, zhou2019fast, zhou2020fast}, regardless of which part $s$ belongs to, there must be $g \neq f$.
Therefore, Theorem~\ref{tm:hybridinfvec} holds.
\end{proof}

Theorem~\ref{tm:hybridinfvec} shows that $S_{hi}(f)$ can uniquely and completely represent $f$, and a canonical form can be derived from this vector.

\begin{definition}
(\emph{hybrid influence-aided canonical form}). The \emph{hybrid influence-aided canonical form} of a Boolean function $f$, denoted by $\kappa_{hi}(f)$,  is defined as the function in $[f]$ with the minimum $S_{hi}(f)$, i.e. $\kappa_{hi}(f)=min_{S_{hi}}([f])$, where $[f]$ represents the NPN equivalent class of $f$.
\end{definition}

\begin{theorem}
\label{tm:hybridinfcf}
(\emph{$\kappa_{hi}(f)$ is an NPN canonical form}.)
$\kappa_{hi}(g)=\kappa_{hi}(f)$ if and only if $g\in [f]$.
That is, $f$ and $g$ belong to the same NPN equivalent class.
\end{theorem}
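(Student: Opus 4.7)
The plan is to prove both directions using two ingredients: the fact that NPN equivalence partitions Boolean functions into disjoint classes, and Theorem~\ref{tm:hybridinfvec}, which guarantees that distinct functions have distinct hybrid influence-aided signature vectors. The latter is what lets the ``min'' in the definition of $\kappa_{hi}$ pick out a genuinely unique representative rather than merely some element of a tie.

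For the \emph{if} direction, I would assume $g \in [f]$. Since NPN equivalence is symmetric and transitive, this forces the set equality $[g] = [f]$. Therefore $\kappa_{hi}(g) = \min_{S_{hi}}([g])$ and $\kappa_{hi}(f) = \min_{S_{hi}}([f])$ are defined as the minimum, under the same lexicographic ordering on signature vectors, of the very same finite set of Boolean functions. By Theorem~\ref{tm:hybridinfvec}, any two functions in $[f]$ achieving the minimum signature vector would have identical $S_{hi}$, hence be identical functions; so the minimizer is unique, and $\kappa_{hi}(g) = \kappa_{hi}(f)$ follows immediately.

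For the \emph{only if} direction, I would argue by the partition property. By definition of $\kappa_{hi}$, we have $\kappa_{hi}(f) \in [f]$ and $\kappa_{hi}(g) \in [g]$. The hypothesis $\kappa_{hi}(f) = \kappa_{hi}(g)$ therefore exhibits a single function that lies in both $[f]$ and $[g]$. Because NPN equivalence classes are pairwise disjoint, two classes sharing an element must coincide, so $[f] = [g]$ and in particular $g \in [f]$.

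The main obstacle, which is quite mild, is the well-definedness of $\kappa_{hi}$ as a function of $[f]$: the minimization must return a single Boolean function, not an equivalence class of tied minimizers. This is exactly what Theorem~\ref{tm:hybridinfvec} supplies, since it embeds the truth table $T(f)$ inside $S_{hi}(f)$ and thereby makes $S_{hi}$ injective on the set of Boolean functions. Once this injectivity is invoked, the rest of the argument is just the standard equivalence-relation bookkeeping outlined above, so the theorem essentially reduces to Theorem~\ref{tm:hybridinfvec} plus the fact that $[\cdot]$ is an equivalence class.
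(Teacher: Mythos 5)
Your proposal is correct and follows essentially the same route as the paper: the forward direction uses that the common minimizer lies in both $[f]$ and $[g]$ (so the classes coincide by the equivalence-relation structure), and the reverse direction uses $[f]=[g]$ to equate the two minima. Your explicit appeal to Theorem~\ref{tm:hybridinfvec} for uniqueness of the minimizer is the same point the paper makes in the remark immediately following its proof, so nothing of substance differs.
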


\begin{proof}
Let $h$=$\kappa_{hi}(f)$=$argmin(S_{hi}(k)|k\in [f])$, and $h$=$\pi_{f\to h}(f)$.
Then, we have $h \in [f]$.

(1)Sufficiency: $h \in [g]$ due to $h$=$\kappa_{hi}(g)$, thus $h$ can be obtained from $g$ after NPN transformations;
$h \in [f]$ due to $h$=$\kappa_{hi}(f)$, thus $f$ can be obtained from $h$ after NPN transformations;
Thus, $f$ can be obtained from $g$ after NPN transformations due to the transitivity of NPN equivalence, that is, $g\in [f]$.

(2)Necessity:
If $g\in [f]$, then $[g]$=$[f]$.
Thus, $\kappa_{hi}(g)$=$min_{S_{hi}}( [g])$=$min_{S_{hi}}( [f])$=$\kappa_{hi}(f)$.

Therefore, Theorem~\ref{tm:hybridinfcf} holds.
\end{proof}
Since $S_{hi}(f)$ can uniquely and completely represent $f$, the equation $\kappa_{hi}(f)=min_{S_{hi}}( [f])$ has a only unique value.
Theorem~\ref{tm:hybridinfcf} guarantees that each Boolean function has a unique \emph{hybrid influence-aided canonical form}.

\subsection{Canonical Form Computation}
\label{sec:computation}

The proposed algorithm in Algorithm~\ref{algo:compute} computes $\kappa_{hi}(f)$ for a given function $f$.
The basic idea of the algorithm is to prune the undetermined phase assignment and permutation transformations based on $S_{hi}(f)$ and then performing exhaustive enumeration to decide the undetermined transformations.
After exhaustive enumeration, we can get $\kappa_{hi}(f)$.
Supposing that the total number of Boolean functions is $m$ and each function has $n$ bits. 
The size of truth table for each function is $2^n$. 
The time complexity to compute cofactor is $O(mn2^n)$. 
According to the computational method that we give in Lemma 1 and 2, the time complexity to compute influence is also $O(mn2^n)$. 
Therefore, the complexity of Algorithm~\ref{algo:compute}  is the same of the one in \cite{zhou2020fast}.

\newcommand{\mycomment}[1]{
\Comment{{\color{lightgray}#1}}
}
\vspace{5pt}
\begin{algorithm}[ht] 
\caption{Computing the Hybrid Influence-aided Canonical Form}
\label{algo:compute} 
\begin{algorithmic}[1] 
\Require Boolean function $f$ with $n$ input variables
\Ensure Canonical form $\kappa_{hi}(f)$

\State Initialize $G$, $U_{phase}$ and $U_{perm}$
\State Decide output polarity by $S_{cof}(f)$ \mycomment{Pruning by $S_{cof}(f)$}
\State Update $G$ and $U_{phase}$ by $S_{cof}(f)$
\State Update $U_{perm}$ according to $G$
\For{group in $G$}
    \State Detect symmetry
    \State Update $U_{phase}$ and $U_{perm}$
\EndFor
\State Update $G$ using $S_{inf}(f)$ \mycomment{Pruning by $S_{inf}(f)$}
\State Update $U_{perm}$ according to $G$
\State Generate all phase assignment $A_{phase}$ based on $U_{phase}$
\For{phase in $A_{phase}$} \mycomment{Pruning by $S_{scc}(f)$}
    \State compute $C_{p}$ according to $S_{scc}^0(f)$ and $S_{scc}^1(f)$
    \If{$C_{p}$ is minimum}
        \State Refresh $C_{phase}$
    \EndIf
\EndFor
\For{candidate in $C_{phase}$} \mycomment{Exhaustive enumeration}
    \State Exhaustive enumeration according to $U_{perm}$
    \State Record $f_{best}$ with the minimum truth table $T(f)$
\EndFor
\State \Return $\kappa_{hi}(f)$=$f_{best}$.
  
\end{algorithmic}
\end{algorithm}

Firstly, we initialize three empty sets to preserve groups~($G$), phase-undetermined variables ~($U_{phase}$), and permutation-undetermined variables~($U_{perm}$).
The output polarity can be determined by $|f|$~(line 2).
The $S_{cof}(f)$ can help group variables and update $U_{phase}$~(line 3).
Then we update $U_{perm}$ based on the rule that variables in different groups cannot be permuted~(line 4).
Next, symmetry variables are detected in all groups to reduce variables in $U_{phase}$ and $U_{perm}$~(line 5 to line 8).
The above steps prune enumerations mainly based on cofactor signatures and variable symmetries, which have been extensively studied by previous works~\cite{abdollahi2008symmetry,huang2013fast,zhou2019fast}.
After pruning by $S_{cof}(f)$, we update $G$ and $U_{perm}$ using $S_{inf}(f)$~(line 9 to line 10).
For each phase assignment in all possible phase assignment set $A_{phase}$ generated based on $U_{phase}$, its permutation cost signature $C_p$ is computed by the pre-trained weighted function~\cite{zhou2020fast} with $S^0_{scc}(f)$ and $S^1_{scc}(f)$~(line 11 to line 13).
If the cost signature is the lowest of the phase assignments, refresh the candidate phase assignment set $C_{phase}$~(line 14 to line 16).
The cost-aware enumeration estimation method using $S_{scc}(f)$ is proposed by ~\cite{zhou2020fast}.
For each candidate phase assignment in $C_{phase}$, exhaustively enumerate the undetermined input order to get permutation transformations~(line 17 to line 18).
Thus, the output polarity, phase assignment, and input order of each variable have already been decided, and we can get the minimum truth table as the canonical form.

Here we give an example to explain the algorithm. For a function 
\begin{equation}
\nonumber
\begin{aligned}
    f &= x_1x_2+x_1(\overline{x_5+x_6(\overline{x_3}+x_4)}) +x_2(x_5+x_6(\overline{x_3}+x_4)) \\
    &=(\mathrm{FFFF3777C8880000})_{16},
\end{aligned}
\end{equation}
the $S_{cof}(f)$=$(|f|, |f_{x_1}|,...,|f_{x_6}|)$=$(32, 16, 16, 16, 16, 27, 21)$, and variables are separated into $G$=$\{(x_1,x_2,x_3,x_4),(x_5),(x_6)\}$~(line 3).
The $U_{phase}$=$\{x_1,x_2,x_3,x_4\}$ because $|f_{x_1}|$ to $|f_{x_4}|$ are half of $|f|$~(line 3).
The $U_{perm}$=$\{x_1,x_2,x_3,x_4\}$ since their cofactor signatures are the same~(line 4).
Next, we only need to detect symmetry in $G_1$=$\{x_1, x_2, x_3, x_4\}$.
Because variables $x_3$ and $x_4$ are symmetric, $x_4$ can be purged in $G_1$, $U_{perm}$ and $U_{phase}$~(line 5 to line 8), that is $G_1$=$U_{perm}$=$U_{phase}$=$\{x_1, x_2, x_3\}$.
After these steps, the total number of residual enumerations is pruned to only $2^3$ $\times$ $3!$=48.
Next, the $S_{inf}(f)$=$(6,10,2,2,10,22)$, so $G$ can be further separated into five groups, $\{(x_1),(x_2),(x_3),(x_5),(x_6)\}$~(line 9).
The $U_{perm}$=$\varnothing$ because permutation only appears in the same group~(line 10).
Since the influence signature is negation-independent, $U_{phase}$ does not change.
After the influence-aided permutation pruning, input order is determined, and the residual enumerations only consist of phase assignments, the number of which is $2^3$=8.
According to the cost estimation method described in~\cite{zhou2020fast}, phase assignment $\overline{x_1}\overline{x_2}x_3$ has the lowest permutation cost among all 8 phase enumerations~(line 12 to line 17).
For this candidate phase assignment, $U_{perm}=\varnothing$; thus, no exhaustive permutation is required~(line 18 to line 19).
Then the algorithm gets the truth table with the determined phase assignment and input order~(line 20).

Figure~\ref{fig:computeexample} depicts the pruning results.
It is a heat map of the remaining enumerations.
We divide the pruning stages into three parts, called pruning by $S_{cof}(f)$~(line 2 to line 8), pruning by $S_{inf}(f)$~(line 9 to line 10), and pruning by $S_{scc}(f)$~(line 12 to line 17).
After each pruning stage, we will increase its activity if an enumeration has not been pruned. 
The darkest points in the figure represent the enumerations with the highest activity, which means that these  are irreducible.
From this heat map, we can see more clearly that influence is a vital ingredient for pruning permutations.

\begin{figure}[tbp]
\setlength{\belowcaptionskip}{0pt}
\centering
\begin{subfigure}[b]{0.49\textwidth}
\centering
\includegraphics[scale=0.15]{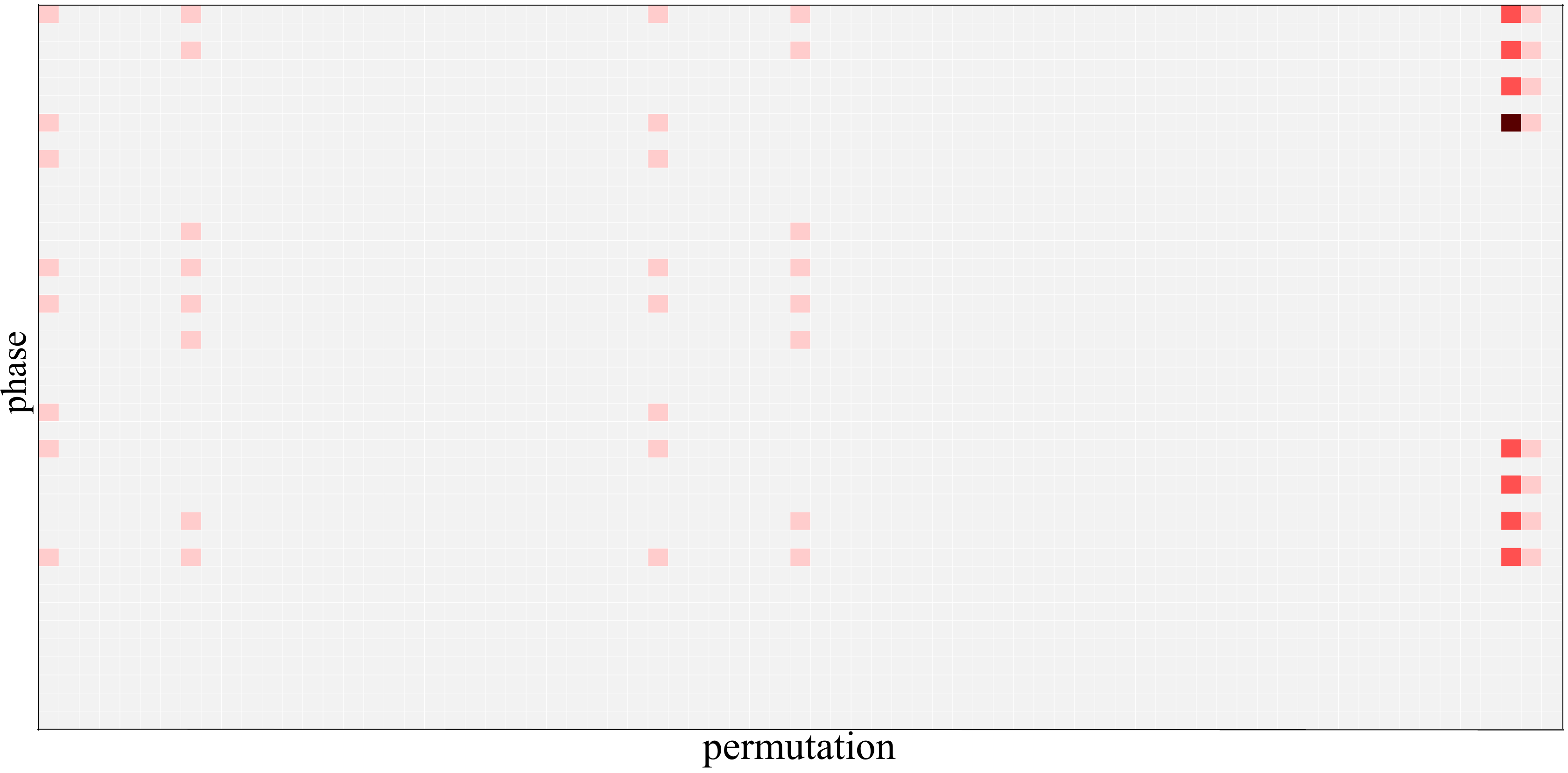}
\label{fig:1original}
\end{subfigure}
\caption{The heatmap of the remaining enumerations at different pruning stages.
The horizontal axis represents the permutation enumeration space, encoded according to the lexicographical order from 1 to n; the vertical axis represents the phase assignment enumeration space, according to binary encoding.
A point represents an NPN transformation, and all points constitute the whole NPN transformation space.}
\vspace{-0.8em}
\label{fig:computeexample}
\end{figure}

\section{Optimized Influence-aided Canonical Form}
\label{sec:optcanonical}

\subsection{Runtime Analysis and Trade-off}
Cost signature $C_{p}$ is proposed to estimate permutation cost and help choose the phase assignment with the lowest permutation cost~\cite{zhou2020fast}.
The permutation cost is computed according to $S^1_{scc}$, which will bring runtime overhead.
In Section~\ref{sec:infexample}, we have shown that influence signature usually gains more groups for further permutation reduction than shifted-cofactor signatures.
Although using influence signature and cost signature simultaneously can further prune the permutation space in some cases, the runtime overhead of computing $C_{p}$ will reduce the benefits.
In other words, using only influence signature to prune permutation space may gain shorter runtime.

\subsection{Optimized Influence-aided Canonical Form}

\begin{definition}
(\emph{optimized influence-aided signature vector}). The \emph{optimized influence-aided signature vector} of $f$ , denote by $S_{oi}(f)$, is a vector composed of cofactor signatures $S_{cof}(f)$, influence signatures $S_{inf}(f)$, the 0th-order shifted-cofactor signature $S^0_{scc}(f)$, followed by the truth table $T(f)$, that is,
\begin{equation}
\nonumber
\begin{aligned}
    S_{hi}(f) =&S_{cof}(f)  S_{inf}(f) C_{p}(f)  S^0_{scc}(f) T(f) \\
     =&\Big( \lvert f \rvert, |f_{x_{1}}|, |f_{x_{2}}|,..., |f_{x_{n}}|, \\  
      &inf_{x_1}(f),inf_{x_2}(f),...,inf_{x_n}(f),  C_{p}(f), \|f\|, \\
      &f(X_{(2^n-1)}), ..., f(X_{(1)}),  f(X_{(0)} ) \Big),
\end{aligned}
\end{equation}
where the hybrid signatures are concatenated as a larger vector.
\end{definition}

\begin{definition}
(\emph{optimized influence-aided canonical form}). 
Similarly, we can define \emph{optimized influence-aided canonical form} $\kappa_{oi}(f)$ as the function in $[f]$ with the minimum $S_{oi}(f)$, i.e. $\kappa_{oi}(f)=min_{S_{oi}}([f])$, where $[f]$ represents the NPN equivalent class of $f$.
\end{definition}

\begin{table*}[t]
\vspace{-0.6em}
\caption{Comparison of exact NPN classification methods.}
\label{tab:comparewithsota}
\centering
\small
\begin{tabular}{IcIcIcIcIcIcIcIcIcIcI}
\shline
 & \multirow{2}{*}{n}   & \multirow{2}{*}{\#Funcs} & \multirow{2}{*}{\makecell[c]{\#Classes}}  & \multicolumn{3}{cI}{Runtime~(s)} & \multicolumn{3}{cI}{Speedup} \\ 
 \cline{5-10}
 &   &    &  &	SOTA~\cite{zhou2020fast} & \textbf{INF}& \textbf{INF+} & SOTA/\textbf{INF}  &	SOTA/\textbf{INF+}	&  \textbf{INF}/\textbf{INF+} \\
\shline
\multirow{12}{*}{\emph{EPFL}} 
& 5  & 678376  & 370    & 0.142  & 0.136  & 0.138  & \textbf{1.04} & 1.03 & 0.99  \\
& 6  & 1054270 & 2339   & 0.579  & 0.459  & 0.471   & \textbf{1.26} & 1.23 & 0.98  \\
& 7  & 730338  & 8824   & 1.542  & 1.037  & 1.014  & 1.49 & \textbf{1.52} & 1.02  \\
& 8  & 1028501 & 27779  & 5.285  & 3.248  & 2.975  & 1.63 & \textbf{1.78} & 1.09  \\
& 9  & 177974  & 26731  & 3.123  & 2.397  & 1.952   & 1.30 & \textbf{1.60} & 1.23  \\
& 10 & 193905  & 50409  & 8.782  & 6.854  & 5.188  & 1.28 & \textbf{1.69} & 1.32  \\
& 11 & 207345  & 80282  & 24.047  & 18.816  & 13.683  & 1.28 & \textbf{1.76} & 1.38  \\
& 12 & 159741  & 87272  & 50.429  & 37.800  & 26.631  & 1.33 & \textbf{1.89} & 1.42  \\
& 13 & 107069  & 74111  & 97.417  & 68.540  & 47.591  & 1.42 & \textbf{2.05} & 1.44  \\
& 14 & 117825  & 85911  & 293.753  & 185.671  & 135.869  & 1.58 & \textbf{2.16} & 1.37  \\
& 15 & 123699  & 94106  & 1142.931  & 513.366  & 385.223  & 2.23 & \textbf{2.97} & 1.33  \\
& 16 & 127102  & 97283  & 5979.359  & 1378.776  & 1082.368  & 4.34 & \textbf{5.52} & 1.27  \\
\shline
\multirow{12}{*}{\emph{MCNC}}
& 5  & 1137438 & 398	& 0.219	& 0.211  & 0.215  & \textbf{1.04} & 1.02 & 0.98 \\
& 6  & 2461375 & 2558	& 0.969	& 0.820  & 0.830   & \textbf{1.18} & 1.17 & 0.99 \\
& 7  & 816538  & 8884	& 1.816	& 1.146  & 1.138  & 1.58 & \textbf{1.60} & 1.01 \\
& 8  & 1171120 & 28031  & 6.402	& 3.680  & 3.394   & 1.74 & \textbf{1.89} & 1.08 \\
& 9  & 92188   & 18394	& 2.769	& 1.980  & 1.578  & 1.40 & \textbf{1.75} & 1.25 \\
& 10 & 113205  & 32796	& 9.366	& 6.870  & 5.334  & 1.36 & \textbf{1.76} & 1.29 \\
& 11 & 96452   & 32153	& 17.997	& 13.859  & 10.299  & 1.30 & \textbf{1.75} & 1.35 \\
& 12 & 125937  & 45546	& 61.001	& 46.436  & 34.720  & 1.31 & \textbf{1.76} & 1.34 \\
& 13 & 137135  & 58921	& 230.835	& 148.387  & 116.286  & 1.56 & \textbf{1.99} & 1.28 \\
& 14 & 143417  & 72602	& 843.348	& 479.178  & 384.648  & 1.76 & \textbf{2.19 }& 1.25 \\
& 15 & 167385  & 85417	& 3154.421	& 1458.433  & 1258.687  & 2.16 & \textbf{2.51} & 1.16 \\
& 16 & 124425  & 47035	& 5485.760	& 1731.770  & 1679.458  & 3.17 & \textbf{3.27} & 1.03 \\
\shline
\end{tabular}
\end{table*}

Similar to Theorem ~\ref{tm:hybridinfvec} and Theorem ~\ref{tm:hybridinfcf}, we can prove that the uniqueness of $S_{oi}(f)$, and $\kappa_{oi}(f)$ is an NPN canonical form.
The computation of $\kappa_{oi}(f)$ is also similar to the computation method of $\kappa_{hi}(f)$ shown in algorithm~\ref{algo:compute}, except that the part of computing $S^1_{scc}(f)$ is correspondingly deleted.
Candidate phase assignments are selected through the single value $S^0_{scc}(f)$.

\section{Evaluation}\label{sec:experiments}
\subsection{Experiment Setup}

We implement the new canonical form and computation method based on Berkeley \textit{ABC}~\cite{abc}.
The state-of-the-art NPN classification method~\cite{zhou2020fast} is also integrated into \textit{ABC} as command \textit{testnpn -A 11}.
The procedure runs on an Intel Xeon Gold 6252 CPU 24-core computer with 128GB RAM.

We use MCNC~\cite{yang1991logic} and EPFL benchmarks~\cite{amaru2015epfl} to test the effectiveness of our algorithm on real synthesis applications.
We enumerate the $K$-cuts in the circuits of the benchmark suites to obtain the $K$-input Boolean functions and extract the truth tables.

\subsection{Comparison with State-of-the-Art Classification Method}

Table~\ref{tab:comparewithsota} compares the influence-aided canonical form method~(\textbf{INF}/\textbf{INF+}) with the state-of-the-art~(SOTA)~\cite{zhou2020fast} method.
Among them, \textbf{INF} corresponds to the hybrid influence-aided canonical form described in Section~\ref{sec:canonical}, and \textbf{INF+} corresponds to the optimized influence-aided canonical form described in Section~\ref{sec:optcanonical}. 
All three methods can achieve exact NPN classification.

Table~\ref{tab:comparewithsota} shows the number of functions~(\#Funcs) with different input bits~(n), the number of exact equivalence classes~(\#Classes), the overall running time of the three methods and the speedup ratio between them.
Both of our two influence-aided canonical form methods run faster than the SOTA for all NPN classification tasks.
Among them, the hybrid influence-aided canonical form method gains up to 4.34x speedup than the SOTA while the optimized influence-aided canonical form method achieves up to 5.52x speedup.
Notably, our influence-aided canonical form methods achieve better speedup as input increases.
Compared to \textbf{INF}, \textbf{INF+} achieves better speedup when the input bit is greater than 6. 
This further demonstrates that influence has superior pruning efficacy over shifted-cofactor.
Detailed analysis will be conducted in the next subsection.

\subsection{Discussion}

This subsection explains why our influence-aided method can run faster by showing some intermediate results. 

\begin{table*}[htbp]
\caption{Comparison between the size of variable groups for SODA and \textbf{INF}/\textbf{INF+} on EPFL benchmark.}
\vspace{-0.6em}
\label{tab:group}
\centering
\begin{tabular}{IcIccIccIccIccIccIccIccIccI}
\shline
\multirow{2}{*}{n} & \multicolumn{2}{cI}{\textbf{\#Total}} & \multicolumn{2}{cI}{\textbf{var-[1]}(\%)}            & \multicolumn{2}{cI}{\textbf{var-[2]}(\%)}  & \multicolumn{2}{cI}{\textbf{var-[3]}(\%)}   & \multicolumn{2}{cI}{\textbf{var-[4,5]}(\%)} & \multicolumn{2}{cI}{\textbf{var-[6,7,8]}(\%)} & \multicolumn{2}{cI}{\textbf{var-[$\geq$9]}(\%)} \\ \cline{2-15}
& \textbf{SODA} & \textbf{INF}  & \textbf{SODA}  & \textbf{INF} & \textbf{SODA} & \textbf{INF} & \textbf{SODA} & \textbf{INF}  & \textbf{SODA} & \textbf{INF} & \textbf{SODA} & \textbf{INF} & \textbf{SODA} & \textbf{INF} \\ \shline
5  & 2675  & 2230    & 54.505 & 84.619 & 25.944 & 13.543 & 10.953 & 1.839 & 8.598  & 0     & 0.112 & 0     & 0      & 0    \\
6  & 21532 & 17658   & 56.154 & 86.023 & 17.885 & 11.400 & 11.453 & 1.597 & 14.509 & 0.980 & 4.575 & 0     & 0      & 0     \\
8  & 235466 & 245195 & 69.375 & 89.617 & 14.313 & 8.481  & 5.553  & 1.518 & 8.971  & 0.352 & 3.770 & 0.027 & 0      & 0     \\
10 & 334259 & 359487 & 85.115 & 95.474 & 10.627 & 4.299  & 2.040  & 0.176 & 1.560  & 0.038 & 0.633 & 0.003 & 0      & 0     \\
12 & 619120 & 679659 & 88.739 & 96.451 & 8.255  & 3.369  & 1.428  & 0.138 & 0.843  & 0.041 & 0.290 & 0.003 & 0.0305 & 0     \\
13 & 556337 & 616669 & 90.002 & 96.974 & 7.238  & 2.891  & 1.283  & 0.121 & 0.715  & 0.014 & 0.298 & 0.001 & 0.0521 & 0     \\
15 & 810309 & 904187 & 90.778 & 97.171 & 6.654  & 2.674  & 1.226  & 0.128 & 0.520  & 0.023 & 0.172 & 0.003 & 0.2968 & 0     \\
16 & 884091 & 986068 & 91.165 & 97.206 & 6.400  & 2.625  & 1.168  & 0.138 & 0.497  & 0.028 & 0.159 & 0.006 & 0.3595 & 0     \\ 
\shline
\end{tabular}
\end{table*}

\begin{table*}[htbp]
\vspace{-0.6em}
\caption{Details of \#phase, \#permutation, final exhaustive enumeration, and runtime of Cp on EPFL benchmark.}
\label{tab:timecompare}
\centering
\begin{tabular}{IcIcccIcccIcccIcccI}
\shline
\multirow{2}{*}{n} & \multicolumn{3}{cI}{\textbf{\#Phase}} & \multicolumn{3}{cI}{\textbf{\#Perm}} & \multicolumn{3}{cI}{\textbf{\#Enum}} & \multicolumn{3}{cI}{\textbf{Time$_{\textbf{cp}}$(sec)}} \\
\cline{2-13}
& \textbf{SODA} & \textbf{INF}    & \textbf{INF+} & \textbf{SODA} & \textbf{INF}    & \textbf{INF+} & \textbf{SODA} & \textbf{INF}    & \textbf{INF+} & \textbf{SODA} & \textbf{INF}    & \textbf{INF+}   \\ \shline
5   & 1149  & 555   & 649    & 3541      & 1740   & 2703      & 1238   & 712    & 712    & 0.006914   & 0.002327   & 0.001208    \\
6   & 10110 & 4296  & 5471   & 31678     & 13304  & 39425     & 9111   & 4727   & 4727   & 0.039637   & 0.024064   & 0.014824    \\
8   & 66800 & 44128 & 73050  & 354438    & 162677 & 2673064   & 69353  & 49049  & 49049  & 0.409565   & 0.937477   & 0.588323    \\
10  & 27317 & 24705 & 41901  & 291860    & 167821 & 345734    & 63186  & 57773  & 57773  & 1.646048   & 2.157659   & 0.811257    \\
12  & 33884 & 31871 & 79205  & 988519    & 222595 & 422534    & 95719  & 92352  & 92352  & 6.400379   & 14.529945  & 5.719692    \\
13  & 26620 & 25324 & 81517  & 2290221   & 181371 & 371319    & 78696  & 77051  & 77051  & 21.638766  & 28.091138  & 12.774357   \\
15  & 34212 & 33170 & 212008 & 28520852  & 249233 & 19550823  & 98681  & 97393  & 97393  & 230.843127 & 225.61318  & 151.732984  \\
16  & 35280 & 34417 & 322081 & 110090650 & 263786 & 37806293  & 101292 & 100238 & 100238 & 866.437428 & 615.558448 & 463.198676  \\
\shline
\end{tabular}
\end{table*}

Section~\ref{sec:infexample} briefly mentioned that introducing influence can increase the number of groups, thereby reducing the number of variables within the same group. 
From the introduction of the canonical form method in Section~\ref{sec:canonical}, it is evident that the more variables there are within the same group, the greater the number of enumerations required subsequently. 
Table~\ref{tab:group} presents the total number of groups and the statistics of different variable groupings for both the SOTA and the INF/INF+ methods. 
The notation var-[x] represents the number of groups with x variables, expressed as a percentage of the total number of groups.
This table shows that incorporating INF features for variable grouping increases the number of groups when the input is larger than five while simultaneously reducing the number of variables within each group. 
This also implies that \textbf{INF}/\textbf{INF+} will prune more transformations.

\begin{figure}[h]
\centering
\includegraphics[scale=0.25]{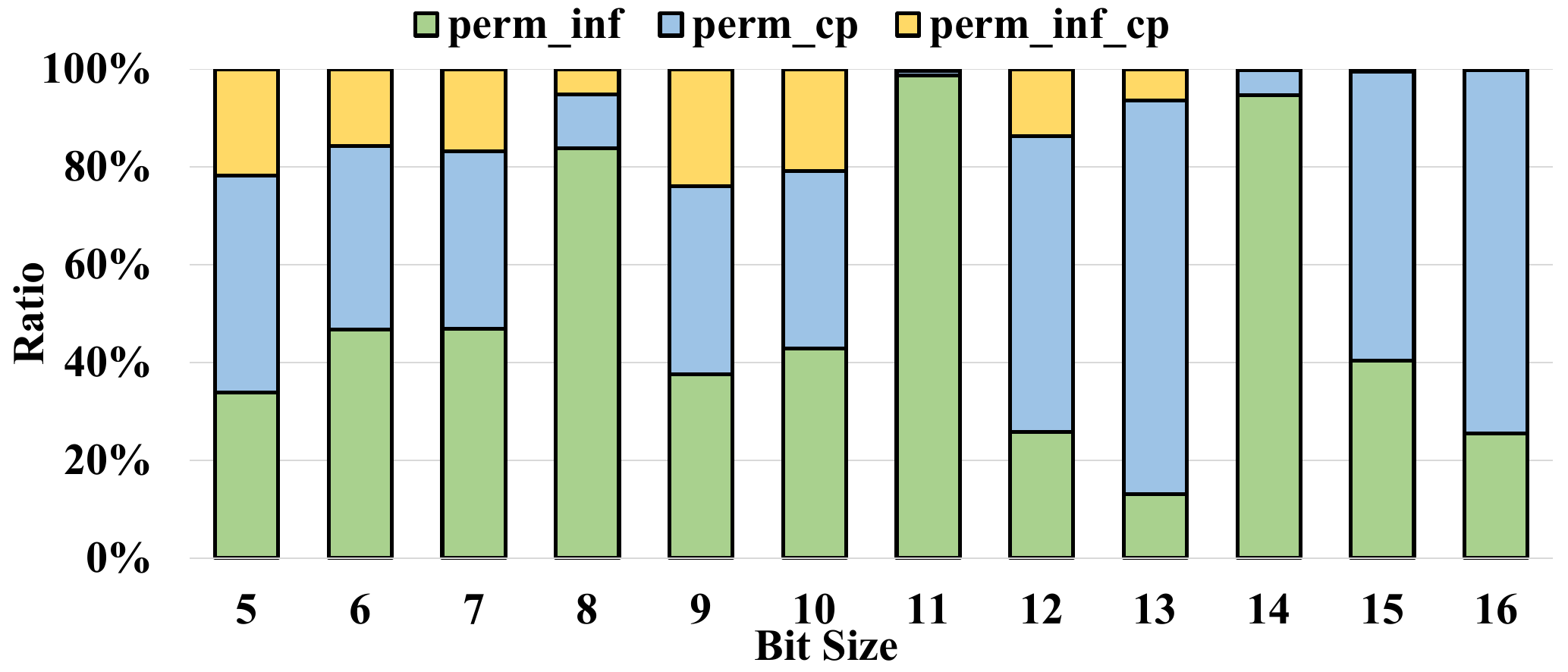}
\caption{\#Permutation enumerations of different methods.}
\label{fig:enumeration}
\end{figure}

Benefiting from the permutation-dependent property of influence, we can use influence with cofactor and cost signature to further reduce the permutation space. 
We collected the number of remaining permutation enumerations separately pruned by $S_{inf}$, $C_{p}$, and $S_{inf}$+$C_{p}$.
Fig.~\ref{fig:enumeration} shows the results. 
The y-axis includes the proportion of \#permutation of each among the three above.
The total permutation enumerations can be reduced significantly when incorporating influence signatures.
Thus, the \textbf{INF} and \textbf{INF+} can gain speedup compared with the SOTA.


Table~\ref{tab:timecompare} provides a detailed summary of the EPFL benchmark, showcasing the total number of transformations and the cumulative runtime for the three canonical form methods. 
The headings of the columns are as follows: \#Perm denotes the remaining permutation transformations after symmetry detection;
\#Phase represents the remaining phase assignment transformations after symmetry detection;
\#Enum indicates the total number of final exhaustive transformations after all pruning steps, 
and Time$_{\textbf{cp}}$ denotes the runtime for cost signature computation.
This table shows that both \textbf{INF} and \textbf{INF+} can realize less transformation for phase, permutation, and final exhaustive enumeration compared to SOTA.
And it is also evident that computing $C_{p}$ needs a long time.
Even though it could further prune the permutation enumeration,
the time overhead for computing $C_{p}$ outweighs the enumeration reduction.
Thus, \textbf{INF+} gains a better speedup than \textbf{INF} when $n$ is larger.
Considering the trade-off in selecting pruning signatures is essential when developing a fast NPN classification method.

\section{Conclusion and Future Work}\label{sec:conclusion}
This paper describes a novel canonical form and its computation algorithm with consideration of Boolean influence.
We highlight that influence signature is negation independent and permutation dependent and design two influence-aided canonical form methods for exact NPN classification.
Experimental results prove that influence plays a significant role in reducing the transformation enumeration in computing the canonical form, and our influence-aided canonical forms gain up to 5.5x speedup than the state-of-the-art NPN classification method.
In the future, we will try different combinations of signatures in the canonical form and attempt to apply similar methods to other applications like NPNP matching.

\section*{Acknowledgments}
This work is partly supported by 
the Major Key Project of PCL (No. PCL2023AS2-3),
the National Natural Science Foundation of China (No. 62090021), 
the National Key R\&D Program of China (No.2022YFB4500500) and (No. 2022YFB4500403), 
the Strategic Priority Research Program of Chinese Academy of Sciences (No. XDA0320300),
the Ministry of Education of China (No. 20YJA880001) and the Innovation Program for Quantum Science and Technology (No. 2021ZD0302900).

\newpage
\bibliographystyle{IEEEtran}
\bibliography{dac23influence.bib}

\begin{thebibliography}{10}
\providecommand{\url}[1]{#1}
\csname url@samestyle\endcsname
\providecommand{\newblock}{\relax}
\providecommand{\bibinfo}[2]{#2}
\providecommand{\BIBentrySTDinterwordspacing}{\spaceskip=0pt\relax}
\providecommand{\BIBentryALTinterwordstretchfactor}{4}
\providecommand{\BIBentryALTinterwordspacing}{\spaceskip=\fontdimen2\font plus
\BIBentryALTinterwordstretchfactor\fontdimen3\font minus
  \fontdimen4\font\relax}
\providecommand{\BIBforeignlanguage}[2]{{%
\expandafter\ifx\csname l@#1\endcsname\relax
\typeout{** WARNING: IEEEtran.bst: No hyphenation pattern has been}%
\typeout{** loaded for the language `#1'. Using the pattern for}%
\typeout{** the default language instead.}%
\else
\language=\csname l@#1\endcsname
\fi
#2}}
\providecommand{\BIBdecl}{\relax}
\BIBdecl

\bibitem{yang2012lazy}
W.~Yang, L.~Wang, and A.~Mishchenko, ``Lazy man's logic synthesis,'' in
  \emph{International Conference on Computer-Aided Design (ICCAD)}, 2012, pp.
  597--604.

\bibitem{haaswijk2017classifying}
W.~Haaswijk, E.~Testa, M.~Soeken, and G.~De~Micheli, ``Classifying functions
  with exact synthesis,'' in \emph{International Symposium on Multiple-Valued
  Logic (ISMVL)}.\hskip 1em plus 0.5em minus 0.4em\relax IEEE, 2017, pp.
  272--277.

\bibitem{cong2001boolean}
J.~Cong and Y.-Y. Hwang, ``Boolean matching for lut-based logic blocks with
  applications to architecture evaluation and technology mapping,'' \emph{IEEE
  Transactions on Computer-Aided Design of Integrated Circuits and Systems},
  vol.~20, no.~9, pp. 1077--1090, 2001.

\bibitem{mohnke2001application}
J.~Mohnke, P.~Molitor, and S.~Malik, ``Application of bdds in boolean matching
  techniques for formal logic combinational verification,'' \emph{International
  Journal on Software Tools for Technology Transfer}, vol.~3, pp. 207--216,
  2001.

\bibitem{hinsberger1998boolean}
U.~Hinsberger and R.~Kolla, ``Boolean matching for large libraries,'' in
  \emph{Design Automation Conference (DAC)}, 1998, pp. 206--211.

\bibitem{debnath2004efficient}
D.~Debnath and T.~Sasao, ``Efficient computation of canonical form for
  {Boolean} matching in large libraries,'' in \emph{Asia and South Pacific
  Design Automation Conf. (ASP-DAC)}, 2004, pp. 591--596.

\bibitem{abdollahi2005new}
A.~Abdollahi and M.~Pedram, ``A new canonical form for fast boolean matching in
  logic synthesis and verification,'' in \emph{Design Automation Conference
  (DAC)}, 2005, pp. 379--384.

\bibitem{agosta2009transform}
G.~Agosta, F.~Bruschi, G.~Pelosi, and D.~Sciuto, ``A transform-parametric
  approach to boolean matching,'' \emph{IEEE Transactions on Computer-Aided
  Design of Integrated Circuits and Systems}, vol.~28, no.~6, pp. 805--817,
  2009.

\bibitem{chai2006building}
D.~Chai and A.~Kuehlmann, ``Building a better boolean matcher and symmetry
  detector,'' in \emph{Design, Automation, and Test in Europe (DATE)},
  vol.~1.\hskip 1em plus 0.5em minus 0.4em\relax IEEE, 2006, pp. 1--6.

\bibitem{kravets2000generalized}
V.~N. Kravets and K.~A. Sakallah, ``Generalized symmetries in boolean
  functions,'' in \emph{International Conference on Computer-Aided Design
  (ICCAD)}.\hskip 1em plus 0.5em minus 0.4em\relax IEEE, 2000, pp. 526--532.

\bibitem{huang2013fast}
Z.~Huang, L.~Wang, Y.~Nasikovskiy, and A.~Mishchenko, ``Fast {Boolean} matching
  for small practical functions,'' in \emph{International Workshop on Logic \&
  Synthesis (IWLS)}, 2013.

\bibitem{abdollahi2008symmetry}
A.~Abdollahi and M.~Pedram, ``Symmetry detection and boolean matching utilizing
  a signature-based canonical form of {Boolean} functions,'' \emph{IEEE
  Transactions on Computer-Aided Design of Integrated Circuits and Systems},
  vol.~27, no.~6, pp. 1128--1137, 2008.

\bibitem{petkovska2016fast}
A.~Petkovska, M.~Soeken, G.~De~Micheli, P.~Ienne, and A.~Mishchenko, ``Fast
  hierarchical npn classification,'' in \emph{International Conference on
  Field-Programmable Logic and Applications (FPL)}.\hskip 1em plus 0.5em minus
  0.4em\relax IEEE, 2016, pp. 1--4.

\bibitem{zhou2019fast}
X.~Zhou, L.~Wang, and A.~Mishchenko, ``Fast adjustable npn classification using
  generalized symmetries,'' \emph{ACM Transactions on Reconfigurable Technology
  and Systems (TRETS)}, vol.~12, no.~2, pp. 1--16, 2019.

\bibitem{zhou2020fast}
{X. Zhou, L. Wang, and A. Mishchenko}, ``Fast exact {NPN} classification by
  co-designing canonical form and its computation algorithm,'' \emph{IEEE
  Transactions on Computers}, vol.~69, no.~9, pp. 1293--1307, 2020.

\bibitem{kahn1989influence}
J.~Kahn, G.~Kalai, and N.~Linial, ``The influence of variables on boolean
  functions,'' in \emph{Symposium on Foundations of Computer Science
  (FOCS)}.\hskip 1em plus 0.5em minus 0.4em\relax IEEE, 1988, pp. 68--80.

\bibitem{o2014analysis}
R.~O'Donnell, \emph{Analysis of boolean functions}.\hskip 1em plus 0.5em minus
  0.4em\relax Cambridge University Press, 2014.

\bibitem{akers1959theory}
S.~B. Akers, Jr, ``On a theory of boolean functions,'' \emph{Journal of the
  Society for Industrial and Applied Mathematics}, vol.~7, no.~4, pp. 487--498,
  1959.

\bibitem{waksman2013fanci}
A.~Waksman, M.~Suozzo, and S.~Sethumadhavan, ``Fanci: identification of
  stealthy malicious logic using boolean functional analysis,'' in \emph{ACM
  SIGSAC Conference on Computer \& Communications Security (CCS)}, 2013, pp.
  697--708.

\bibitem{zhang2013veritrust}
D.~Debnath and T.~Sasao, ``Veritrust: Verification for hardware trust,'' in
  \emph{Design Automation Conference (DAC)}, 2013, pp. 591--596.

\bibitem{zhang2023rethinking}
J.~Zhang, S.~Zheng, L.~Ni, H.~Li, and G.~Luo, ``Rethinking npn classification
  from face and point characteristics of boolean functions,'' in \emph{Design,
  Automation, and Test in Europe (DATE)}, 2023, pp. 1--6.

\bibitem{abc}
\BIBentryALTinterwordspacing
{ABC}: A system for sequential synthesis and verification. [Online]. Available:
  \url{https://people.eecs.berkeley.edu/~alanmi/abc/}
\BIBentrySTDinterwordspacing

\bibitem{yang1991logic}
S.~Yang, ``Logic synthesis and optimization benchmarks version 3.0,''
  \emph{Tech. Report, Microelectronics Centre of North Carolina}, 1991.

\bibitem{amaru2015epfl}
L.~Amar{\'u}, P.-E. Gaillardon, and G.~De~Micheli, ``The epfl combinational
  benchmark suite,'' in \emph{International Workshop on Logic \& Synthesis
  (IWLS)}, 2015.

\end{thebibliography}

\end{CJK*}
\end{document}